\documentclass{article}
\usepackage[accepted]{icml2025}

\usepackage{amssymb}
\usepackage{mathtools}
\usepackage{amsthm}

\newtheorem*{theorem*}{Theorem}
\newtheorem{theorem}{Theorem}
\newtheorem{definition}{Definition}

\usepackage[utf8]{inputenc} 
\usepackage[T1]{fontenc}    
\usepackage{hyperref}       
\usepackage{url}            
\usepackage{booktabs}       
\usepackage{amsfonts}       
\usepackage{nicefrac}       
\usepackage{microtype}      
\usepackage{xcolor}         
\usepackage{graphicx} 

\usepackage{amsmath}

\usepackage{comment}

\graphicspath{ {images/} }

\usepackage[nameinlink]{cleveref}
\usepackage{subcaption}

\usepackage{etoolbox}       
\newtoggle{show_comments}
\toggletrue{show_comments}

\definecolor{darkgreen}{RGB}{0,125,0}

\newtheorem{corollary}{Corollary}[theorem]

\definecolor{red}{HTML}{DB4437}
\definecolor{blue}{HTML}{4285F4}
\definecolor{green}{HTML}{0F9D58}
\usepackage[nameinlink]{cleveref}

%

\begin{document}

\twocolumn[
\icmltitle{Jackpot! Alignment as a Maximal Lottery}



\icmlsetsymbol{equal}{*}

\begin{icmlauthorlist}
\icmlauthor{Roberto-Rafael Maura-Rivero}{GDM,LSE}
\icmlauthor{Marc Lanctot}{GDM}
\icmlauthor{Francesco Visin}{GDM}
\icmlauthor{Kate Larson}{GDM,uwaterloo}
\end{icmlauthorlist}

\icmlaffiliation{LSE}{London School of Economics}
\icmlaffiliation{GDM}{Google DeepMind}
\icmlaffiliation{uwaterloo}{University of Waterloo}

\icmlcorrespondingauthor{Roberto-Rafael Maura-Rivero}{r.maura-rivero@lse.ac.uk}

\icmlkeywords{Social Choice Theory, Alignment, Nash Learning from Human Feedback, Reinforcement Learning from Human Feedback, RLHF}

\vskip 0.3in
]
\printAffiliationsOnly


\begin{abstract}

 Reinforcement Learning from Human Feedback (RLHF), the standard for aligning Large Language Models (LLMs) with human values, is known to fail to satisfy properties that are intuitively desirable, such as respecting the preferences of the majority \cite{ge2024axioms}. To overcome these issues, we propose the use of a probabilistic Social Choice rule called \emph{maximal lotteries} as a replacement for RLHF. We show that a family of alignment techniques, namely Nash Learning from Human Feedback (NLHF) \cite{munos2023nash} and variants, approximate maximal lottery outcomes and thus inherit its beneficial properties.
 We confirm experimentally that our proposed methodology handles situations that arise when working with preferences more robustly than standard RLHF, including supporting the preferences of the majority, providing principled ways of handling non-transitivities in the preference data, and robustness to irrelevant alternatives. 
 This results in systems that better incorporate human values and respect human intentions.

\end{abstract}

\section{Introduction}\label{sec:introduction}

\begin{figure}[ht]
    \centering
    \includegraphics[width=0.8\linewidth]{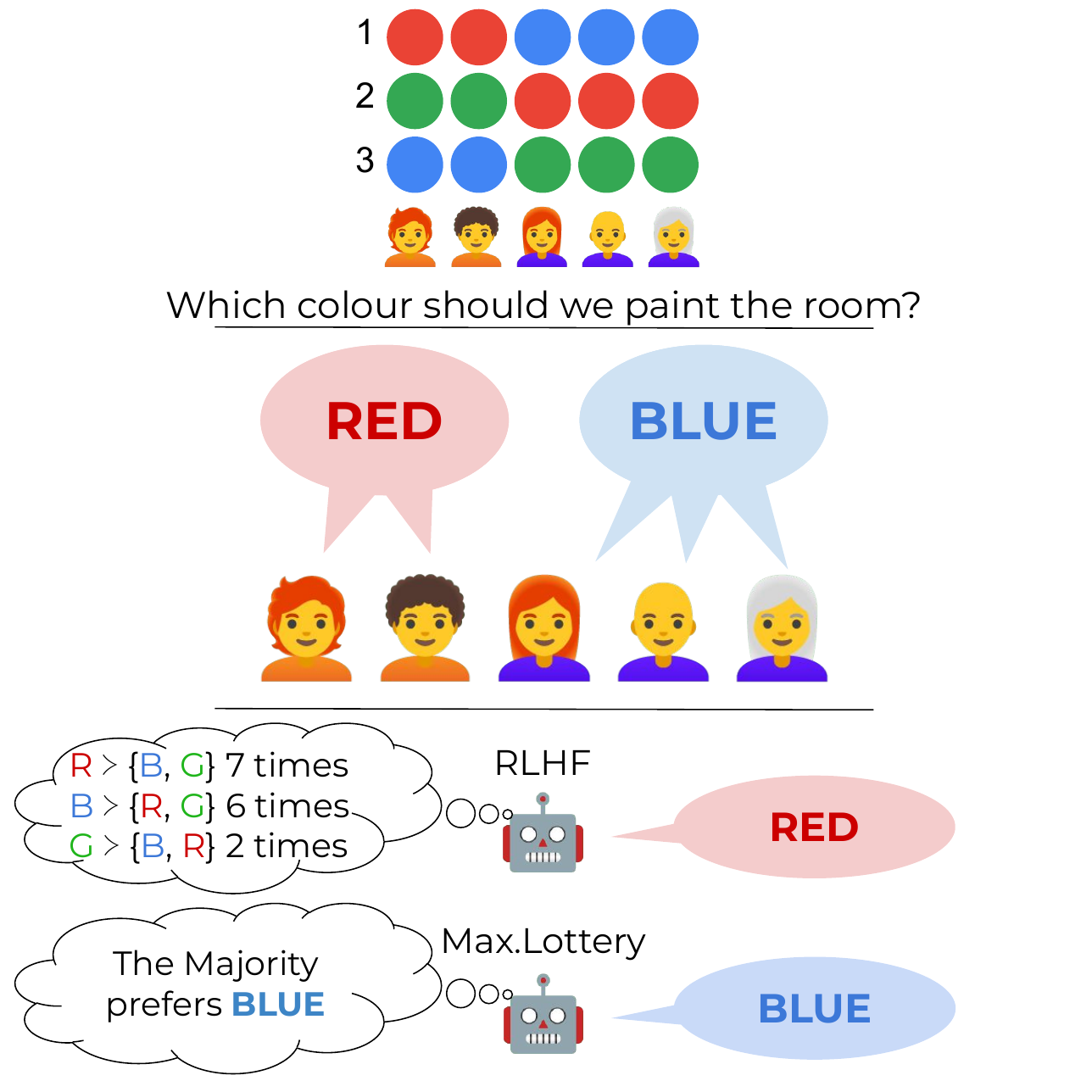}
    \caption{Although \textbf{\textcolor{blue}{B} is the option preferred by the majority}, LLMs aligned with RLHF fail to capture that, returning \textbf{\textcolor{red}{R}}. Thus, RLHF violates major democratic properties such as majority rule, while methods that emulate Maximal Lotteries satisfy them.}
    \label{fig:example-image}
\end{figure}

Reinforcement Learning from Human Feedback (RLHF) has emerged as the de-facto standard to align Large Language Models (LLMs) with human values and preferences. 
Using ideas from revealed preference theory in economics, current RLHF methods adapt the LLM’s distribution of generated text or tokens so as to maximize a reward model learned from the ratings of human evaluators. 

Despite its widespread use in fine tuning LLMs~\cite{touvron2023llamaopenefficientfoundation,openAI,anthropic,bard}, it has been recognized that current approaches suffer from fundamental limitations in the human feedback, the reward model, and training the policy~\cite{casper2023open}. These challenges  include tradeoffs between the richness and efficiency of feedback types, with binary preferences between pairs of examples being more prominent~\cite{christiano2017deep}, the assumption that a single reward function can represent a diverse population, which leads to current approaches modelling differences among evaluators as noise instead of important sources of disagreement~\cite{baumler-etal-2023-examples} or ambiguity~\cite{schaekermann2018resolvable},  and reward models failing to generalize even with perfect training data~\cite{skalse2023misspecificationinversereinforcementlearning}.

A number of recent papers have begun to explore alignment problems through the lens of Social Choice~\cite{ge2024axioms, dai2024mapping, mishra2023ai,siththaranjan2024distributional}, which provides principled methods for aggregating preferences, particularly for diverse populations, as well as tools and insights to understand the benefits and challenges that arise from that~\cite{brandt2016handbook}. In a recent position paper, \citeauthor{conitzerposition} argued that methods from Social Choice Theory provide alternative approaches to current RLHF methodologies~\cite{conitzerposition}.

In this paper we make the argument that a Probabilistic Social Choice function, \emph{maximal lotteries}~\cite{fishburn1984probabilistic}, is particularly well suited for RLHF and alignment problems. In particular,
\begin{itemize}
    \item We propose an alternative alignment method to RLHF based on Maximal Lotteries, a stochastic voting rule from Social Choice Theory.
    \item We formally prove that game-theoretic approaches to preference modeling in RLHF, specifically \textit{Nash Learning from Human Feedback}~\cite{munos2023nash} and its variants~\cite{calandriello2024human, swamy2024minimaximalist}, emulate maximal lotteries. 
    \item Through controlled experiments, we show that our approach produces LLM outputs that better reflect aggregate human preferences compared to standard RLHF, supporting the preferences of the majority, providing principled ways of handling non-transitivities in the preferences, and robustness to of irrelevant alternatives.
\end{itemize}

\section{Background}\label{sec:preliminaties_and_notation}

\subsection{Reinforcement Learning from Human Feedback}\label{subsec:RLHF}

RLHF involves training a reward model and then using this model to guide a policy (the LLM) through reinforcement learning. A reward model $r_\theta(x, y)$ is trained to predict a score indicating how good the response $y$ is to the prompt $x$. This model is learned from a dataset of pairwise comparisons, where human annotators indicate their preferences. The training objective involves maximizing the likelihood of correctly predicting the preferred option using a binary cross-entropy loss, $\mathcal{L}(\theta)=-E_{ (x,y^+,y^-)}[\log(\sigma(r_\theta(x,y^+)-r_\theta(x,y^-)))$. Here,  $(x, y^+, y^-)$ represents a data point, with $x$ being the prompt, $y^+$ the preferred completion (the ``winner''), and $y^-$ the less preferred completion (the ``loser'') , $r_\theta$ is the reward model parameterized by $\theta$ and $\sigma$ is the sigmoid function.
This loss function is based on the Bradley-Terry model~\cite{Rafailov23DPO} which is the foundation of the classical Elo rating system~\cite{Elo78}.
While this model is widely-used for RLHF, it has several well-documented problems that could affect preference learning~\cite{Nihar17Simple,Balduzzi19,bertrand2023on,lanctot2023evaluating,munos2023nash}.

The LLM, acting as the policy $\pi_\phi$ parameterized by $\phi$, is then trained using reinforcement learning algorithms like Proximal Policy Optimization (PPO)~\cite{schulman2017proximal}.  A simplified objective (ignoring regularization) can be written as $\max_\phi \mathbb{E}_{x \sim \mathcal{D}, y \sim \pi_\phi(x)} [r_\theta(x, y)]$,
  where $\mathcal{D}$ is the distribution of prompts.
  This loss encourages the LLM to generate completions that receive high reward.

\subsection{Social Choice Theory}\label{subsec:social_choice}

The central problem addressed by Social Choice Theory is how to aggregate the preferences of a population so as to reach some optimal collective decision.
Assume there is a population $\mathcal{P}$ of individuals, each of whom have preferences over some set of options $\mathcal{Y}$. Given a pair of different alternatives $a, b\in\mathcal{Y}$, an individual $i\in\mathcal{P}$ is able to report that either they prefer $a$ to $b$ ($a\succ_i b$) or $b$ to $a$ ($b\succ_i a$).\footnote{For ease of exposition we will assume strict preferences in the rest of the paper, but results can be extended to weak preferences.} A \textbf{Social Choice function} $f$ is a map that assigns to each preference profile $\{\succ_i\}_{i\in \mathcal{P}}$ a winning alternative in $\mathcal{Y}$, i.e. $f(\{\succ_i\}_{i\in \mathcal{P}})\in \mathcal{Y}$. A \textbf{Probabilistic Social Choice function} $\rho$ is a similar concept that returns a distribution over the set of alternatives, $\rho(\{\succ_i\}_{i\in \mathcal{P}})\in\Delta(\mathcal{Y})$.
 
Much of Social Choice Theory is axiomatic in nature~\cite{brandt2016handbook}, in that the field tries to understand what properties Social Choice functions can and should exhibit. For example, a \textbf{Condorcet winner} defines a fairly intuitive concept: an alternative $a$ is a Condorcet winner if $a$ preferred by more individuals than $b$ in every head-to-head pairing for every $b\in\mathcal{Y}$ (for a more formal definition of Condorcet winners, see~\Cref{def:condorcet_winner} in \Cref{sec:properties}). Social Choice functions that are guaranteed to return a Condorcet winner when it exists are called \emph{Condorcet-consistent} rules. Not all Social Choice functions are Condorcet consistent, like the well known class of \textbf{scoring rules} which include plurality and Borda. These rules translate individual's preference rankings over $m$ alternatives to a score vector $\mathbf{w}=(w_1,\ldots, w_m)$ where $w_1\geq w_2\geq\ldots w_m$ and $w_1>w_m$. Each alternative's total score is obtained by summing the individual scores assigned by all voters. Scoring rules can be interpreted as Social Choice functions where alternatives are simply sorted according to their scores and the top option is returned.

The Borda rule, for example, uses a scoring vector $\mathbf{w}=(m-1,m-2,\ldots, 0)$.\footnote{Since Borda is a C2 rule according to Fishburn's classification, it can be computed by using pairwise comparisons. The details are beyond the scope of this paper but we refer an interested reader to ~\cite{brandt2016handbook}.}  While the scoring rules are not guaranteed to return Condorcet winners, they exhibit other desirable properties.  Selecting a Social Choice function always implies a tradeoff in properties it will support, as crystalized by Arrow's Impossibility Theorem~\cite{arrow1950difficulty}, so clear specifications as to what properties are important in the context of an application of Social Choice is of critical importance.

\section{Alignment as a Social Choice Problem}

We support the view of several works in the literature~\cite{ge2024axioms, dai2024mapping, mishra2023ai, conitzerposition} that the alignment problem may be formalized as a Social Choice problem. 
Under this lens, given a prompt $x$, the set of all possible responses (up to a finite maximum length $L$) forms the set of alternatives $\mathcal{Y}$ the LLM has to choose from. The population $\mathcal{P}$ is then the set of individuals that report their preferences over $\mathcal{Y}$ in the dataset of pairwise comparisons $\{(x_k, y^+_k, y^-_k)\}_{k\in\{1,\dots,K\}}$, where $K$ is the length of the dataset. If we denote the probability of statement $y$ being the response of the LLM to prompt $x$ as $\pi(y|x)$, the LLM can be thought of as a distribution over all possible responses $\mathcal{Y}$. This distribution has been trained on the dataset $\{(x_k, y^+_k, y^-_k)\}_{k\in\{1,\dots,K\}}$. Thus, $\pi( . |x)$ is a function from the preference profile $\{\succ_i\}_{i\in\mathcal{P}}$ to a distribution over $\mathcal{Y}$. Therefore, it is a Probabilistic Social Choice function. To simplify notation, in the rest of the paper we will omit the conditioning on prompt $x$.

Therefore, solving the problem of alignment requires (a) to choose a Probabilistic Social Choice function $\rho$ with desirable properties from Social Choice Theory (e.g. Majority, Condorcet Consistency, Pareto Efficiency, IIA, ...), and (b) to finetune the LLM pushing its distribution as close as possible to that of the Probabilistic Social Choice function $\rho$.

\subsection{RLHF Implements Borda}\label{sec:borda}

There is already an existing connection between current usages of RLHF and Social Choice Theory. In a recent paper, ~\citeauthor{siththaranjan2024distributional} showed that the standard RLHF methods based on the Bradley-Terry model effectively implement the Borda scoring rule (Theorem 3.1~\cite{siththaranjan2024distributional}). For the sake of completeness we provide the full theorem statement and proof in \Cref{appendix:borda_count}.

Since Borda is a well understood Social Choice function, we know that it is not Condorcet consistent. This means that all RLHF methods that aggregates individuals' preferences by emulating Borda may result in some counter-intuitive outcomes. Consider the example in~\Cref{fig:example-image}. A group of five individuals are asked to specify their favourite colour. Two of the five report that they prefer red more than green, and green more than blue (i.e. $\textcolor{red}{R} \succ_i \textcolor{green}{G} \succ_i \textcolor{blue}{B}$ for $i \in \{1,2\}$). Three of the five report they prefer blue more than red, and red more than green (i.e. $\textcolor{blue}{B} \succ_i \textcolor{red}{R} \succ_i \textcolor{green}{G}$ for $i \in \{3,4,5\}$). Applying Borda to this example, the Borda scores for the three alternatives (i.e., binary win counts) are 7 for red, 6 for blue, and 2 for green. Thus, an RLHF trained policy would be biased towards returning red, which seems counterintutive and not necessarily a good reflection of the underlying preferences of the group. This raises the question: \emph{What properties do we want alignment methods for LLMs to support?}

\subsection{Properties for Alignment}\label{sec:properties}

In this section we propose several properties to assess the alignment for LLMs.
These properties are inspired by concepts studied in the Social Choice literature and address concerns that arise when reasoning about aggregation of individuals' preferences~\cite{brandt2016handbook}, while also addressing some of the concerns recently raised in~\citet{casper2023open}.

First, we argue that outcomes like the one shown in \Cref{sec:borda} should be avoided. When \textbf{\textcolor{blue}{B}} is preferred by a majority of the individuals, that is what the LLM should return. In other words, any alignment method should emulate a Social Choice function that is \emph{majority consistent}.

\begin{definition}
A Social Choice function $f$ is \textit{majority consistent} if for all preferences $\{{\succ_i}\}_{i \in \mathcal{P}}$, if
$$\exists y^{\star} \in \mathcal{Y} \text{ s.t. } \# \{i\ \in \mathcal{P} : \forall y \in \mathcal{Y} \setminus \{y^{\star}\}, ( y^{\star} \succ_i y) \}\geq  \frac{\# \mathcal{P}}{2}$$  
then $y^{\star}  = f(\{{\succ_i}\}_{i \in \mathcal{P}}).$
\end{definition}
A Condorcet winner is an alternative that beats every other alternative in a pairwise majority vote.

\begin{definition}\label{def:condorcet_winner}
Alternative $a \in \mathcal{Y}$ is a \textit{Condorcet winner} with respect to preferences $\{{\succ_i}\}_{i\in\mathcal{P}}$ if for all $b \in \mathcal{Y} \setminus \{a\}$, 
$N(a, b) > N(b, a)$, where 
$N(a, b) = \# \{i \in \mathcal{P}: a \succ_i b\}$.
\end{definition}

Clearly a majority winner is a Condorcet winner. Any Social Choice function that returns a Condorcet winner when it exists is called \emph{Condorcet consistent}. It has been argued that a Condorcet winner captures the inherent representativeness of the individuals' preferences and is viewed as a consensus choice~\cite{deCondorcet1785}, Any alignment method that emulates a Condorcet consistent Social Choice function will also best reflect the interests of the population.

Condorcet winners may not always exist. In particular, if there are collection of preferences that induce a cycle, then there is no Condorcet winner. A simple example where this happens is shown in \Cref{tab:rock_paper_scissors}, where there is no clear consensus as to which is the socially preferred colour. We would like an alignment method that can capture this lack of agreement across the individuals, allowing for nuance. In particular, we argue that alignment methods should emulate probabilistic Social Choice functions.
\begin{definition}
Given preferences $\{{\succ_i}\}_{i \in \mathcal{P}}$, a  \emph{probabilistic} Social Choice function, returns a distribution over alternatives $\mathcal{Y}$. 
\end{definition}

\begin{table}[t!] 
\centering
\begin{tabular}{c|ccc}
\textbf{Preference} & \textbf{Ana} & \textbf{Bob} & \textbf{Carla}\\ \hline
\textbf{1st}         & \textcolor{red}{R }& \textcolor{green}{G } & \textcolor{blue}{B }  \\ \hline
\textbf{2nd}        & \textcolor{blue}{B }  & \textcolor{red}{R } & \textcolor{green}{G }    \\ \hline
\textbf{3rd}        & \textcolor{green}{G } & \textcolor{blue}{B }  & \textcolor{red}{R }   \\ \hline
\end{tabular}
\caption{Cyclic preference example.}
\label{tab:rock_paper_scissors}
\end{table}

Finally, we argue that an alignment method should be robust against irrelevant alternatives whenever possible. The property, \emph{independence of irrelevant alternatives} states that the relative ranking of two alternatives should not be effected by the presence or absence of a third, irrelevant alternative.

\begin{definition} 
A Social Choice function \(f\) satisfies \textbf{IIA} if its choice between any two alternatives \(a\) and \(b\) depends \textbf{only} on how individuals rank \(a\) and \(b\) relative to each other, and not on how they rank other alternatives. Formally: $\forall~a,~b~\in~ \mathcal{Y},$ $\quad~\forall~\text{ profiles }~\{\succ_i\}, \{\succ_i'\},$
\[ \text{ if } a \succ_i b \iff a \succ_i' b, \forall i \in \mathcal{P}, \]
then the Social Choice from those two profiles is the same whenever it concerns choosing between \(a\) and \(b\). That is, if in the first profile \(f(\{\succ_i\})\) is \(a\) (or \(b\)), changing only preferences involving alternatives other than \(a\) and \(b\) cannot change whether \(f\) selects \(a\) or \(b\).
\end{definition}

\section{Standard RLHF Does Not Satisfy Desired Properties}\label{sec:problems}

In the previous section we proposed a set of properties that we believe alignment methods for LLMs should exhibit. In this section we show that current RLHF methods based on the Bradley-Terry model do not satisfy any of the properties, thereby raising the questions—previously noted by others \cite{chen2024preferencelearningalgorithmslearn}—regarding their suitability for alignment problems. While this section builds intuition on simplistic examples, in \Cref{sec:results} we further support our findings with experimental results.

To build intuition, in the following we ignore the prompt, and consider a scenario with only three possible options: \textcolor{red}{R}, \textcolor{green}{G} and \textcolor{blue}{B}. Relying on the fact that RLHF emulates Borda count~\cite{siththaranjan2024distributional}, we also assume that the LLM post-trained using standard RLHF gives probability close to one to whichever single-token word had the highest win-rate comparison. 
Of course, in a realistic scenario the LLM would only provide probability close to one if the KL regularization term from the loss is made negligible, either by training for long enough or by giving it a small weight. However, we argue that, if anything, this raises a new concern: through RLHF, a practitioner is aligning the LLM to behave in a middle point between a pretrained model which only cares about what is the probability of the next output (which lacks alignment guarantees) and a model that gives probability one to the token that has the highest win-rate in a preference dataset. 

\noindent\textbf{RLHF is not Majority Consistent nor Condorcet Consistent:}
We showed that RLHF is not Majority consistent in \Cref{sec:borda}. 
Similarly it is not Condorcet consistent. In the example shown in \Cref{fig:example-image}, the Condorcet winner is \textcolor{blue}{B}. This is because, when \textcolor{blue}{B} is compared with \textcolor{red}{R}, three out of the five individual's prefer \textcolor{blue}{B} to \textcolor{red}{R}. Similarly when \textcolor{blue}{B} is compared to \textcolor{green}{G}, three out of the five individuals prefer \textcolor{blue}{B} to \textcolor{green}{G}. However, since RLHF emulates Borda, the resulting policy will be biased towards \textcolor{red}{R}.\\

\begin{table}[t!] 
\centering

\begin{subtable}[t]{0.45\textwidth}
\begin{tabular}{c|ccccc}
\textbf{Preference} & \textbf{Ana} & \textbf{Bob} & \textbf{Carla} & \textbf{Dario}  & \textbf{Eve}\\ \hline
\textbf{1st}         & \textcolor{red}{R} &\textcolor{red}{R} & \textcolor{blue}{B}& \textcolor{blue}{B}& \textcolor{blue}{B}  \\ \hline
\textbf{2nd}        & \textcolor{blue}{B} &  \textcolor{blue}{B} &  \textcolor{red}{R}& \textcolor{red}{R}  & \textcolor{red}{R}  \\ \hline
\end{tabular}
\caption{A simple scenario with just two colours.}\label{tab:IIA_2}
\end{subtable}

\bigskip 

\begin{subtable}[t]{0.45\textwidth}
\begin{tabular}{c|ccccc}
\textbf{Preference} & \textbf{Ana} & \textbf{Bob} & \textbf{Carla} & \textbf{Dario}  & \textbf{Eve}\\ \hline
\textbf{1st}         & \textcolor{red}{R} &\textcolor{red}{R} & \textcolor{blue}{B}& \textcolor{blue}{B}& \textcolor{blue}{B}  \\ \hline
\textbf{2nd}        & \textcolor{green}{G} & \textcolor{green}{G} &   \textcolor{red}{R}  & \textcolor{red}{R}  & \textcolor{red}{R}  \\ \hline
\textbf{3rd}        & \textcolor{blue}{B} &  \textcolor{blue}{B} &   \textcolor{green}{G}& \textcolor{green}{G}& \textcolor{green}{G}  \\ \hline
\end{tabular}
\caption{Introduction of an irrelevant alternative \textcolor{green}{G}.}\label{tab:IIA_3}
\end{subtable}

\caption{Independence of Irrelevant Alternatives example. (a) shows a simple scenario with a clear majority. (b) introduces an irrelevant alternative \textcolor{green}{G} that should not change the preference ranking.}
\label{tab:IIA}
\end{table}

\noindent{\bf RLHF is not Independent of Irrelevant Alternatives:} 
To explain the Independence of Irrelevant Alternatives (IIA), let's consider the simple scenario shown in \Cref{tab:IIA_2}, with a set of preferences over two alternatives, \textcolor{red}{R} and \textcolor{blue}{B}. Clearly, the aggregated preference ranking is that \textcolor{blue}{B} is socially preferred to \textcolor{red}{R}, and standard RLHF would align a model to most likely return \textcolor{blue}{B}. Now imagine that a third alternative \textcolor{green}{G} is introduced (\Cref{tab:IIA_3}). This addition doesn't change the relative ranking of \textcolor{blue}{B} with respect to \textcolor{red}{R} for any individual in the population. If RLHF was independent of irrelevant alternatives, \textcolor{red}{R} would continue to be lower ranked (and thus be assigned lower reward when learning a policy) than \textcolor{blue}{B}.

However, akin to the case of \Cref{fig:example-image}, this is not the case and RLHF would assign highest reward to \textcolor{red}{R}.

\noindent{\bf Cyclic Preferences:}
Collections of preferences that exhibit cycles, such as those shown in \Cref{tab:rock_paper_scissors}, can be challenging. In these cases there is no Condorcet winner, and Borda is unable to distinguish between the alternatives without relying on some tie-breaking method.

 Concretely, due to the stochastic nature of training, RLHF would likely lead to one of the options (say, \textcolor{green}{G}) having slightly higher reward. This would bias the LLM toward that option, even though it is not genuinely superior and a uniform distribution would be more aligned. Regularization techniques like early stopping or KL penalization can lessen this problem, at the expense of keeping the final distribution closer to the original pre-trained model, which is not necessarily aligned.

\section{Using Maximal Lotteries to Align LLMs}
\label{sec:align-maxlot}

Having established the shortcomings of Bradley-Terry based RLHF, the questions becomes "Is there an alternative approach?". We answer in the affirmative. In particular, we argue that a \emph{probablistic} Social Choice function, \textbf{maximal lotteries}, is particularly well suited for alignment of LLMs.

\subsection{Maximal Lotteries}
\label{subsec:definition_max_lott}

Given a set of preferences, a Probabilistic Social Choice function returns a \emph{distribution} over alternatives, called  a lottery. One particular Probabilistic Social Choice function is the maximal lottery~\cite{Krewaras65, fishburn1984probabilistic}. Define $\Delta(\mathcal{Y})$ as the set that contains all distribution (i.e. lotteries) over the options $\mathcal{Y}$.
A maximal lottery,  
$\pi \in \Delta(\mathcal{Y})$, is one that is (weakly) preferred to any other lottery: namely 
\begin{equation}
    \pi^T M \pi^{\prime} \ge 0, \forall \pi^{\prime} \in \Delta(\mathcal{Y}),
    \label{eq:max_lottery}
\end{equation}
where $M$ is the pairwise margin matrix,
 where each entry $M_{ij}$ represents the net margin of voters who prefer $a$ over $ b$:  
  $M_{ij} = N(a, b) - N(b, a)$.

Equivalently, one can view $M$ as the payoffs of a carefully constructed symmetric zero-sum \emph{margin game} where the payoffs are win or loss magnitudes of different pairwise comparisons. The  maximal lottery is, thus,  the mixed maximin (or Nash equilibrium) solution to the game, and can be computed via linear programming in polynomial time \cite{brandl2022analytical}.

Maximal lotteries (ML) exhibit a number of interesting properties.
First, they require little structure to be placed on  voters' preferences since $M$ is computed solely using pair-wise comparisons. This makes them particularly well suited for current LLM alignment processes where preference data typically takes this form.

Second, they are Condorcet-consistent and Majority-consistent, in that alternatives in the support of the maximal lottery are the Condorcet winner. They also provide a level of protection against irrelevant alternatives through both being clone-consistent (see \Cref{appendix:SCT_properties}) and independent of irrelevant alternatives (in a probabilistic sense, see~\citet{brandl2020arrovian,brandl2016consistent} and \Cref{appendix:probabilistic_voting_rule}), and are able to handle cyclic preferences in a principled manner.

Maximal lotteries, in essence, aim to maximize the probability of selecting an alternative that would win in a pairwise majority comparison against any other alternative.  This captures a strong notion of collective preference, prioritizing options that are most likely preferred by a majority of individuals.
If there is a clear winner (i.e. there is a Condorcet winner), then Maximal Lotteries will give probability one to that option. When there is debate among a few of options, Maximal Lotteries will return a distribution over those options.

\subsection{Using Maximal Lotteries to Align LLMs.}

We believe that emulating Maximal Lotteries with LLMs holds significant potential as a solution to the alignment problem, as it has been shown to be the only \emph{probabilistic} Social Choice function that satisfies the key desiderata of Arrow's Impossibility Theorem in a stochastic setting \cite{brandl2020arrovian}. This ensures that the LLM's output respects fundamental Social Choice principles.

The crucial question, then, is how to train an LLM to behave like a Maximal Lottery. This can be achieved with the following objective function:

\begin{theorem}
Let $\mathcal{Y}$ be the set of all possible statements up to a finite maximum length $L$. Let $\pi$ and $\pi^{\prime}$ represent two policies (i.e., LLMs). For two statements $a, b \in \mathcal{Y}$, let $P(a~\succ~b)$ be the probability that a random individual picked uniformly from society prefers $a$ over $b$. Let $P(a~\sim~b)$ be the analogous quantity, but for indifference.

Then, the solution $\pi^*$ to the following maximin optimization problem
\begin{small}
\begin{equation}
\max_{\pi}  \min_{\pi^{\prime}}  \sum_{a \in \mathcal{Y}} \sum_{b \in \mathcal{Y}} \pi(a) \left( P(a \succ b) +\frac{1}{2} P(a \sim b)  \right)\pi^{\prime}(b) 
\label{eq:theorem_1}
\end{equation}
\end{small}
is the Maximal Lottery for the Social Choice problem defined by the set of alternatives $\mathcal{Y}$ and the population's preferences over these alternatives.  (See proof in \Cref{appendix:proof}.)
\end{theorem}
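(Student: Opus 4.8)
The plan is to show that the maximin objective in \eqref{eq:theorem_1} is, up to an affine reparametrization that leaves the optimizers untouched, exactly the symmetric zero-sum margin game whose equilibria define the maximal lottery in \eqref{eq:max_lottery}. First I would collect the bilinear form into a matrix: setting $A_{ab} = P(a \succ b) + \frac{1}{2}P(a \sim b)$, the objective reads $\pi^T A \pi'$ over $\pi, \pi' \in \Delta(\mathcal{Y})$. Since statements have bounded length $L$ the alternative set $\mathcal{Y}$ is finite, so the bilinear form on the product of simplices satisfies the hypotheses of von Neumann's minimax theorem, guaranteeing a saddle point and letting me speak unambiguously of the maximin value and its optimizers.

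The central step is to relate $A$ to the margin matrix $M$. Using that distinct alternatives obey $P(a \succ b) + P(b \succ a) + P(a \sim b) = 1$, a short manipulation yields $A_{ab} - \frac{1}{2} = \frac{1}{2}\bigl(P(a \succ b) - P(b \succ a)\bigr)$, which is a strictly positive multiple of $M_{ab} = N(a,b) - N(b,a)$ (the constant being $\tfrac{1}{2|\mathcal{P}|}$ under uniform sampling of individuals). Writing $J$ for the all-ones matrix, this is the identity $A = \frac{1}{2}J + cM$ with $c > 0$, and its key structural consequence is that $M$ is antisymmetric, $M^T = -M$ — precisely what makes the margin game symmetric and zero-sum.

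I would then use that $\pi$ and $\pi'$ are probability vectors, so $\pi^T J \pi' = \bigl(\sum_a \pi(a)\bigr)\bigl(\sum_b \pi'(b)\bigr) = 1$ and therefore $\pi^T A \pi' = c\,\pi^T M \pi' + \frac{1}{2}$ identically on the simplices. Because a positive rescaling and an additive constant preserve every argmax and argmin, the solutions of \eqref{eq:theorem_1} coincide exactly with those of $\max_\pi \min_{\pi'} \pi^T M \pi'$. To finish, I would identify this latter optimizer with the maximal lottery: antisymmetry gives $\pi^T M \pi = 0$ for all $\pi$, which together with the symmetry of the game forces the minimax value to be $0$; hence any maximin strategy $\pi^*$ obeys $\pi^{*T} M \pi' \geq 0$ for every $\pi' \in \Delta(\mathcal{Y})$, which is exactly the defining inequality \eqref{eq:max_lottery}.

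I expect the affine-reduction computation — correctly folding the indifference term $\frac{1}{2}P(a\sim b)$ into the margins and tracking the normalization constant — to be the only place that demands genuine care; the game-theoretic identification afterwards is standard. The one conceptual point I would state explicitly rather than gloss over is why the symmetric zero-sum game has value exactly zero, since this is what upgrades a bare maximin strategy to the sign condition $\geq 0$ that the maximal-lottery definition requires.
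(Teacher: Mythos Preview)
Your proposal is correct and follows essentially the same route as the paper: both arguments reduce to the affine identity $A=\tfrac{1}{2}J+cM$ (the paper writes it as $\tilde{M}=2\tilde{N}-\mathbb{J}_m+\tilde{E}$, which is the same relation rearranged), use $\pi^T J\pi'=1$ on the simplex, and conclude that the optimizers of \eqref{eq:theorem_1} and of $\max_\pi\min_{\pi'}\pi^T M\pi'$ coincide. The only substantive addition in your write-up is the explicit justification, via antisymmetry of $M$ and the resulting zero game value, that a maximin strategy actually satisfies the defining inequality \eqref{eq:max_lottery}; the paper simply asserts this equivalence, so your version is if anything slightly more complete.
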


Beyond the properties highlighted earlier, Maximal Lotteries possess other desirable Social Choice characteristics like participation \cite{brandl2019welfare} and reinforcement \cite{brandl2016consistent}.

\subsection{Maximal Lotteries and the  Connection with Nash Learning From Human Feedback}

The objective function presented in Theorem 1 bears a striking resemblance to the optimization process employed in Nash Learning from Human Feedback (NLHF) \cite{munos2023nash}.  NLHF
aims to find a policy $\pi$ that maximizes its expected reward against an adversarial policy $\pi^{\prime}$: 

\begin{equation}
\max_{\pi}  \min_{\pi^{\prime}} \quad   \sum_{a \in \mathcal{Y}} \sum_{b \in \mathcal{Y}} \pi(a) P(a \succ b) \pi^{\prime}(b),
\label{eq:nlhf}
\end{equation}
where $P(a \succ b)$ represents the probability that a human prefers statement $a$ over $b$.  The key difference between this NLHF formulation and our proposed objective function is the term
$\frac{1}{2}P(a \sim b)$,
which accounts for cases where individuals are indifferent between two options.

This difference highlights a crucial aspect of human preferences: indifference. While standard NLHF focuses solely on strict preferences, our formulation acknowledges that individuals may be equally satisfied with multiple options. 

However, in practical scenarios, we often only have access to data reflecting which option a user 
\textbf{selected} in a pairwise comparison, rather than their true underlying preference. Let's then define  $\tilde{P}(a \succ b)$ as the probability that an individual has \textbf{selected} option $a$ when presented with both options $\{a,b\}$ .  This selection probability can be influenced by various factors, including presentation bias (e.g., users might tend to select the first option presented, like in  \citet{craswell2008experimental} and \citet{wang2018position}).  However, under the assumption that individuals facing indifference choose randomly between the options (which we argue is reasonable if we mitigate the bias by randomizing the order of the two sentences in each datapoint), we can show that maximizing the NLHF objective with the \textbf{selection} probability still converges to the Maximal Lottery:
\begin{corollary}
Assume that when individuals are indifferent between two options they are equally likely to select either option in a pairwise comparison, i.e., $\tilde{P}(a~\succ~b) = P(a \succ b) +\frac{1}{2} P(a \sim b).$  
Then, solving $\max_{\pi} \min_{\pi^{\prime}} \quad \sum_{a \in Y} \sum_{b \in Y} \pi(a) \tilde{P}(a \succ b) \pi^{\prime}(b)$ also yields the Maximal Lottery.
\end{corollary}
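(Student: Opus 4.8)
The plan is to reduce the Corollary directly to the Theorem rather than re-deriving everything from scratch. The key observation is that the two maximin problems have \emph{identical} objective functions: the Theorem's payoff matrix has entries
\[
A_{ab} = P(a \succ b) + \tfrac{1}{2} P(a \sim b),
\]
while the Corollary's payoff matrix has entries $\tilde{A}_{ab} = \tilde{P}(a \succ b)$. The indifference assumption $\tilde{P}(a \succ b) = P(a \succ b) + \tfrac{1}{2}P(a \sim b)$ is precisely the statement that $\tilde{A}_{ab} = A_{ab}$ for every pair $a, b \in \mathcal{Y}$. Since the two objectives coincide entry-by-entry, the two maximin optimization problems are literally the same problem, so they share the same set of optimal solutions $\pi^*$. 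By the Theorem, that solution is the Maximal Lottery, which is exactly what the Corollary asserts.

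The first step I would carry out is to substitute the indifference hypothesis into the Corollary's objective, writing
\[
\sum_{a \in \mathcal{Y}} \sum_{b \in \mathcal{Y}} \pi(a)\, \tilde{P}(a \succ b)\, \pi^{\prime}(b)
= \sum_{a \in \mathcal{Y}} \sum_{b \in \mathcal{Y}} \pi(a) \left( P(a \succ b) + \tfrac{1}{2} P(a \sim b) \right) \pi^{\prime}(b),
\]
making explicit that the objective of the Corollary's program reduces termwise to the objective of \Cref{eq:theorem_1}. The second step is to note that taking $\max_{\pi} \min_{\pi^{\prime}}$ of two identical functions over the same feasible set $\Delta(\mathcal{Y}) \times \Delta(\mathcal{Y})$ yields identical optimizers; there is no need to re-invoke the minimax theorem or re-run any linear-programming argument, since all of that work is already discharged inside the Theorem. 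The conclusion then follows immediately: the optimizer is the Maximal Lottery.

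I do not anticipate a genuine mathematical obstacle here, because the Corollary is essentially a relabeling of the Theorem under a modeling assumption. The only subtle point worth stating carefully is the justification of the indifference assumption itself, namely why $\tilde{P}(a \succ b) = P(a \succ b) + \tfrac{1}{2}P(a \sim b)$ is reasonable. This is where the ``choose randomly under indifference'' premise does its work: an individual who strictly prefers $a$ selects $a$ with probability one, and an individual indifferent between $a$ and $b$ selects $a$ with probability one half, so summing over the strict-preference population and half of the indifferent population recovers exactly the stated selection probability. I would spell this decomposition out explicitly, and remark that the order-randomization mentioned in the text is what removes the presentation bias that would otherwise break the ``one half'' split. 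Once that assumption is granted, the equivalence of the two programs, and hence of their solutions, is immediate from the Theorem.
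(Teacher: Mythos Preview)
Your proposal is correct and follows exactly the intended route: the paper does not give a separate proof of the Corollary because it is meant to follow from the Theorem by the direct substitution you describe. Plugging the assumption $\tilde{P}(a\succ b)=P(a\succ b)+\tfrac{1}{2}P(a\sim b)$ into the Corollary's objective reproduces \Cref{eq:theorem_1} verbatim, so the two maximin problems coincide and the Theorem immediately gives the conclusion.
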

This corollary shows the robustness of our approach. Even with noisy data, reflecting selection probabilities rather than true preferences, the optimization process can still recover the desirable properties of the Maximal Lottery. It is important to note, however, that position bias in pairwise comparisons should be considered and mitigated.

\section{Experiments}
\label{sec:exp}

In this section, we compare RLHF with algorithms designed to emulate Maximal Lotteries, evaluating their performance across key Social Choice properties. Specifically, we test whether RLHF fails to satisfy majority rule, Condorcet consistency, and independence of irrelevant alternatives (IIA), and whether it struggles with non-transitive aggregate preferences, and conduct the same analysis for Maximal Lotteries. Full implementation details, including hyperparameters and training configurations, are provided in \Cref{appendix:hyperparameters}. We also note that the literature of NLHF has already compared their methods with RLHF algorithms. We provide a summary of their results in \Cref{appendix:previous_NLHF_experiments}

\subsection{Experimental Methodology}\label{sec:exp-method}

To evaluate the performance of Maximal Lotteries against RLHF, we employ synthetic datasets designed to mimic the structure of real-world preference data commonly used in RLHF training.  These synthetic datasets allow for controlled experimentation and enable a precise analysis of the properties discussed in \Cref{sec:problems,sec:align-maxlot}.
Our synthetic datasets consist of triplets:  `<prompt>, <preferred option>, <rejected option>`.  The prompt remains constant across all datasets, and requires the model to choose a favourite colour from three choices, "red", "blue" or "green", which form the set of possible options (alternatives) $\mathcal{Y}$.

To generate a dataset, we first define a population characterized by a probability distribution $P$ over the set of preferences over the alternatives $\mathcal{Y}$. This distribution represents the underlying preferences of the population. For example, consider a population split in 2 groups, A and B, with 60\% of the population belonging to A who prefer ($\textcolor{red}{R} \succ_{\text{A}} \textcolor{blue}{B}\succ_{\text{A}} \textcolor{green}{G}$) and the remainder (B) who prefer ($\textcolor{blue}{B}\succ_{\text{B}}  \textcolor{red}{R}\succ_{\text{B}} \textcolor{green}{G}$), like in \Cref{fig:example-image}. 

We then iteratively generate $2048$ datapoints in three steps: we first sample two distinct alternatives uniformly from $\mathcal{Y}$ without replacement; we then sample an individual from the population $\mathcal{P}$; and finally we determine the preferred and rejected option according to the individual's preference, and record them as a new dataset row.
By varying the preferences, the population distribution $\mathcal{P}$ and the dataset size, we can generate datasets exhibiting different preference patterns. In all our experiments, we sampled 2048  datapoints.

\subsection{Models}

For our experiments we start from three distinct copies of the pretrained Gemma 2 2b model \cite{team2024gemma} \textit{without} instruction tuning. We train one policy (RLHF policy) using Proximal Policy Optimization (PPO) \cite{schulman2017proximal}, as explained in \Cref{subsec:RLHF}, and a second policy (max-lottery policy) using Self-Play Preference Optimization (SPO)~\cite{swamy2024minimaximalist}, which belongs to the family of algorithms that emulate a Maximal Lottery policy. Finally, we use the last copy of the model as the RLHF reward model.

\textbf{RLHF policy (PPO):} We fine-tune the reward model on the synthetic dataset of human preferences described previously and use it to assign a score to every LLM response. This score guides the policy during reinforcement learning.

\textbf{Maximal Lottery policy (SPO)}: This model is optimized using the objective function presented in Theorem 1 (Section 5) with the SPO algorithm on the same human preference dataset used to train the RLHF reward model. The details of the algorithms, the choice of hyperparameters and their implementation can be found in \Cref{appendix:hyperparameters}.

\subsection{Results}\label{sec:results}

This section reports the comparisons of RLHF and Maximal Lotteries based algorithms on the synthetic dataset described in~\Cref{sec:exp-method}, simulating the cases introduced in \Cref{sec:problems}. 

\begin{figure*}[t]
\centering
\includegraphics[width=0.8\textwidth]{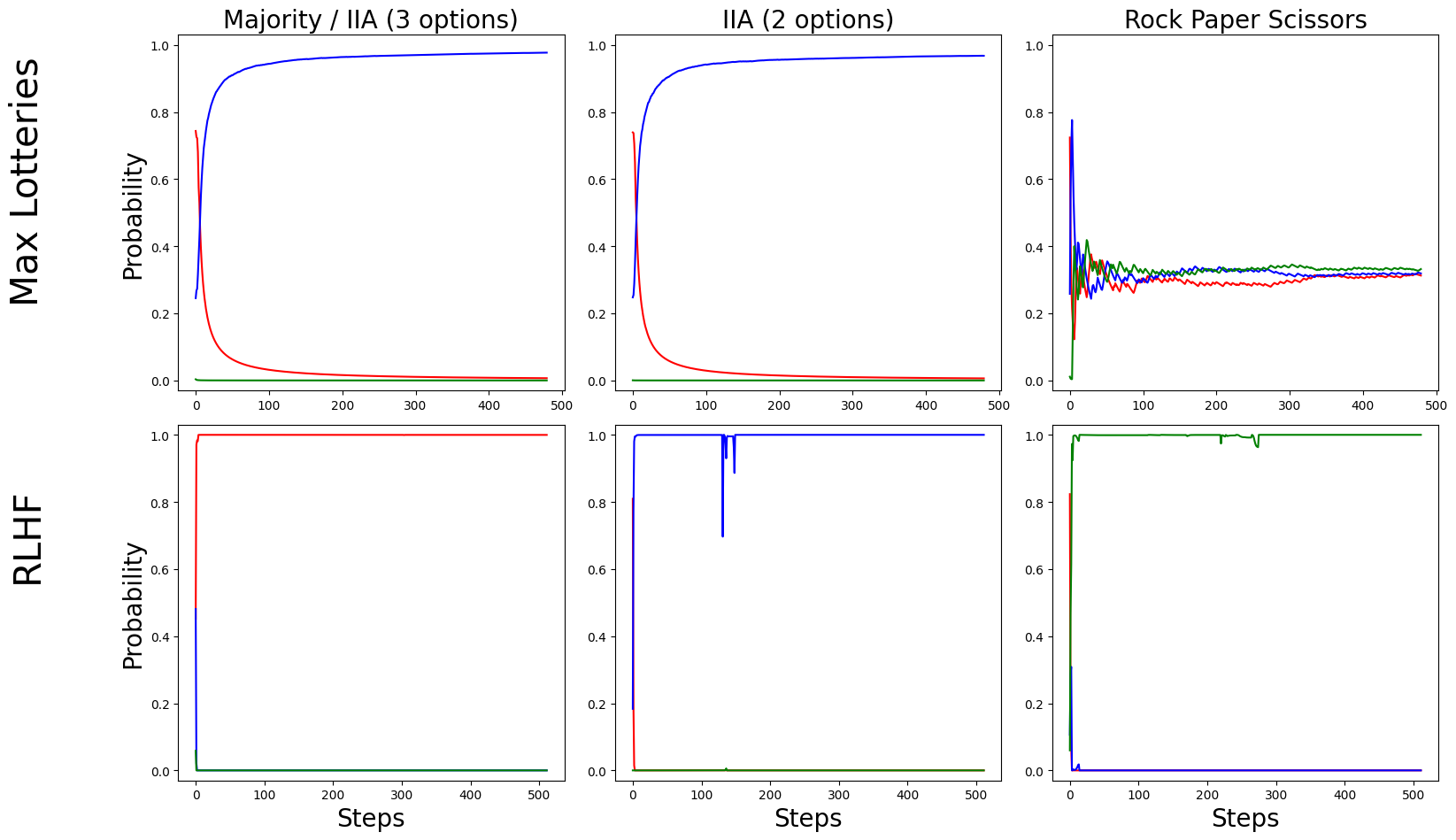}
\caption{
Simulations results \textbf{Left column}: Populations simulate preferences from \Cref{fig:example-image} and \Cref{tab:IIA_3} $(2x(\textcolor{red}{R} \succ \textcolor{green}{G} \succ \textcolor{blue}{B}), 3x(\textcolor{blue}{B} \succ \textcolor{red}{R} \succ \textcolor{green}{G}))$. \textbf{Middle column:} Populations mirror \Cref{tab:IIA_2} preferences $(2x(\textcolor{red}{R} \succ \textcolor{blue}{B}), 3x(\textcolor{blue}{B} \succ \textcolor{red}{R}))$. \textbf{Right column}: Populations exhibit cyclical preferences from \Cref{tab:rock_paper_scissors} $(1x(\textcolor{red}{R} \succ \textcolor{green}{G} \succ \textcolor{blue}{B}), 
 1x(\textcolor{green}{G} \succ \textcolor{blue}{B} \succ \textcolor{red}{R}), 
 1x(\textcolor{blue}{B} \succ \textcolor{red}{R} \succ \textcolor{green}{G}))$. 
}
\label{fig:last_plot}
\end{figure*}

\subsubsection{Experiment 1: Majority and Condorcet}
\label{subsec:exp_prob_majority}

In this experiment (left column of \Cref{fig:last_plot}), we impose a distribution of the population equivalent to that of \Cref{fig:example-image}. The majority alternative is \textcolor{blue}{B}. However, as predicted, RLHF assigns probability close to 1 to the alternative \textcolor{red}{R}. Instead, the maximal lotteries inspired method converges to the preferred alternative \textcolor{blue}{B} with probability close to one.

\subsubsection{Experiment 2: Independence of Irrelevant Alternatives (IIA)}
\label{subsec:exp_prob_IIA}
To evaluate whether RLHF and Maximal Lottery methods respect the Independence of Irrelevant Alternatives (IIA) property, we simulated the scenario described in \Cref{tab:IIA_2,tab:IIA_3}, where preferences among options shift due to the introduction of an irrelevant alternative. Specifically, we used synthetic preference datasets representing two cases: one with three alternatives (\textcolor{red}{R}, \textcolor{blue}{B}, and \textcolor{green}{G}), which coincides with the experiment in \Cref{subsec:exp_prob_majority}, and one with only two alternatives (\textcolor{red}{R} and \textcolor{blue}{B}).

The center column of \Cref{fig:last_plot} (with two alternatives) and the left column (with three alternatives), reveal that RLHF violates the IIA property. Indeed, in the two-alternative scenario, the RLHF-trained policy assigns near-zero probability to \textcolor{red}{R}, favoring the majority winner \textcolor{blue}{B} instead. However, in the three-alternative case, RLHF reverses this decision, giving almost all probability to \textcolor{red}{R}. 

In contrast, the Maximal Lottery approach maintains a stable output distribution across the two scenarios. Regardless of whether \textcolor{green}{G} is included, the probability assigned to \textcolor{red}{R} and \textcolor{blue}{B} remains consistent, close to 1 for \textcolor{blue}{B}, showing that Maximal Lottery methods satisfy the IIA property.

\subsubsection{Experiment 3: Cyclic Preferences}
\label{subsec:exp_prob_2}
In this experiment, we impose the population distribution of \Cref{tab:rock_paper_scissors}. As it can be seen in the right column of \Cref{fig:last_plot}, the maximal lotteries inspired method converges to an LLM that returns each of the colours \textcolor{blue}{B}, \textcolor{red}{R} and \textcolor{green}{G} approximately 33\% of the time.

In contrast, the policy trained with RLHF converges to a policy that returns one arbitrary colour (in this particular simulation \textcolor{green}{G}) with probability one.

\section{Related Work}

This work builds upon several areas at the intersection of AI alignment and Social Choice Theory. Traditional approaches such as Reinforcement Learning from Human Feedback (RLHF) have become the de-facto standard \cite{christiano2017deep, stiennon2020learning} to finetune LLMs. While RLHF has proven effective for guiding LLMs, recent studies have highlighted its limitations \cite{siththaranjan2024distributional, casper2023open, ge2024axioms}.

Recent research has explored the application of Social Choice Theory to address the AI alignment problem. Papers such as  \cite{ge2024axioms, dai2024mapping, mishra2023ai, conitzerposition} argue for viewing alignment as a Social Choice Theory problem, which allows the application of well-established Social Choice functions to aggregate human preferences.

Recent results identify Maximal Lotteries as the unique probabilistic voting system satisfying Arrow’s axioms \cite{brandl2020arrovian}, which has motivated its use in different areas of Machine Learning \cite{lanctot2023evaluating}.

Finally, this work also connects with the emerging field of Nash Learning with Human Feedback (NLHF) \cite{munos2023nash, calandriello2024human, swamy2024minimaximalist}, which proposes alternatives to RLHF based on an optimzation process inspired by Game Theory. 

\section{Limitations and future work}

Our proposed framework, while offering a robust theoretical foundation for aligning LLMs with aggregate human preferences, faces some limitations that require further investigation.

A central challenge lies in the estimation of $P(a\succ b | x)$, particularly in two key points: 1) what do we mean when we say that an individual $i$ prefers $a$ to $b$ ($a\succ_i b)$; and 2) how do we capture that the preferences depend, not only on the prompt $x$, but on the context. 

On the first point, what is the correct interpretation that an individual prefers an option $a$ with respect to another $b$? How realistic is it to assume that it is possible to estimate the preferences of an individual by showing them pairs of sentences, that is the standard practice nowadays? Are there better ways to infer preferences? On this issue, we point to the reader to \citet{gabriel2020artificial} for an extensive discussion. Microeconomic theory and Industrial Organization theory has a history of attacking similar problems and could be a promising avenue to solve them.

Secondly, the appropriateness of a response can vary significantly depending on the context in which a conversation takes place. An answer that is perfectly acceptable in a comedy show might be entirely inappropriate in a professional setting. Therefore, it is crucial to explore methods that allow large language models (LLMs) to incorporate contextual information when generating responses. Developing strategies to enhance context awareness in LLMs is an important step toward more reliable and nuanced AI interactions. For a more in-depth discussion on the theory of appropriateness, we refer readers to \citet{leibo2024theory}.

Another important avenue for future work is the development of an online version of our approach that continuously updates and adapts to changes in societal preferences. Human values and societal norms evolve over time, and a static alignment approach may become outdated or fail to reflect current ethical considerations. An online adaptation mechanism would allow the model to integrate new preference data dynamically, ensuring that its responses remain aligned with contemporary views while avoiding abrupt shifts that could lead to instability or exploitation by adversarial actors. We believe that developing online voting mechanisms that approximate maximal lotteries, such as those explored in \cite{brandl2024natural}, is a promising direction for achieving this goal.

Addressing these previous points is crucial for realizing the full potential of our framework. By combining rigorous Social Choice principles with advanced machine learning techniques, we can strive to develop LLMs that are more reliably and ethically aligned with the diverse values and preferences of humanity.

\section{Conclusion}

This paper examines the limitations of RLHF in aligning LLMs with aggregate human preferences, demonstrating its vulnerability to violations of key Social Choice principles, and proposing an alternative framework grounded in Maximal Lotteries. We establish a formal connection between this optimal voting system, known to be the only probabilistic voting system that circumvents Arrow's impossibility theorem, and Nash Learning from human feedback (NLHF) algorithms, offering a practical path for training LLMs that robustly reflect collective human preferences. Our experimental results confirm that methods that emulate Maximal Lotteries, like NLHF and variantes, can overcome the shortcomings of RLHF, yielding LLM whose responses better align with the majority's will. This includes supporting the preferences of the majority, providing principled ways of handling non-transitivities in the preference data, and independence of irrelevant alternatives. The shift from simple reward maximization to a framework rooted in the rich theoretical foundations of Social Choice Theory promises a more nuanced and robust approach to aligning LLMs with human values, ultimately contributing to the development of AI systems that truly serve humanity's best interests.


\section*{Impact Statement}

Ensuring that AI systems are aligned with diverse human values and preferences is critical for the future of society. The growing influence of AI in decision-making processes, from healthcare to education, emphasizes the importance of considering and valuing everyone's preferences. By integrating techniques that emulate Maximal Lotteries, we provide a robust framework for AI alignment, addressing key limitations of existing methods, such as RLHF. However, achieving true alignment also requires accurately estimating individual preferences and tackling challenges like reward hacking. Additionally, the datasets used to estimate these preferences must be created with a representative sample of the population to ensure fairness and inclusivity. If these issues are not tackled, we could end up aligning LLMs to the wrong set of values and preferences, which could have harmful unintended consequences with highly capable AI systems.


\appendix

\bibliographystyle{icml2025}
\bibliography{bibliography}

\section{Appendix}\label{appendix:appendix}

\subsection{Arrow's Impossibility Theorem}
\label{appendix:arrows_impossibility_theorem}

In this section we will discuss Arrow's Impossibility Theorem \cite{arrow1950difficulty}, arguably the most fundamental result in Social Choice Theory. In this theorem, it is shown that, if $\#\mathcal{Y} \geq 3$, there is no \textbf{deterministic} voting system such that it satisfies three basic properties: Independence of Irrelevant Alternatives, Pareto Efficiency and Non-dictatorship. 

Throughout this paper we have used Social Choice functions for ease of exposition. However, this theorem is usually expressed using Social Welfare functions (SWF) $F$, i.e. maps from preference profiles $\{{\succ_i}\}_{i \in \mathcal{P}}$ to a ranking, $\succ_S$. Note, however, that any Social Welfare Function implicitly defines a Social Choice function that returns the top ranked option.

The IIA property was explained in \Cref{subsec:social_choice}, but will be re-expressed for SWFs. The last two properties will be introduced later in this section. 

\subsubsection{IIA - SWF version}

\textbf{Definition}
A social welfare function \(F\) satisfies \textbf{IIA} if: $\forall~a,~b~\in~ \mathcal{Y},$ $\quad~\forall~\text{ profiles }~\{\succ_i\}, \{\succ_i'\},$
\[ \text{ if } a \succ_i b \iff a \succ'_i b, \forall i \in \mathcal{P}, \]
\[ \text{ then } a \succ_s b \iff a \succ_s' b,\] where $\succ_s = F(\{{\succ_i}\}_{i \in \mathcal{P}})$ and $\succ'_s = F(\{{\succ'_i}\}_{i \in \mathcal{P}})$

\subsubsection{Pareto Efficiency}
Intuitively, if everyone prefers outcome $x$ to $y$, then collectively we should also prefer $x$ over $y$. That property is captured by Pareto Efficiency. 

\textbf{Definition (Pareto Efficiency for SWFs):} A Social Welfare function $F$ is \textit{Pareto efficient} if for any preference profile  $\{{\succ_i}\}_{i \in \mathcal{P}}$ and for any two alternatives $x, y \in \mathcal{Y}$, if  $x \succ_i y$ for all $i \in \mathcal{P}$, then $x \succ_S y$, where  $\succ_S = F(\{{\succ_i}\}_{i \in \mathcal{P}})$.

Example: when choosing between chocolate and vanilla, if everyone in a group prefers chocolate ice cream to vanilla, choosing chocolate would be Pareto Efficient. Choosing vanilla would not be, as everyone could be made better off by switching to chocolate.

\subsubsection{Non dictatorship}
This property formalizes the intuitive idea that a dictator, an individual that makes all collective decisions, is not a desirable form of making choices.

\textbf{Definition (non-dicatorship for SWFs):} A Social Welfare function $F$ satisfies \textit{non-dictatorship} if there is no individual $i$ (the dictator) such that for any preference profile  $\{{\succ_i}\}_{i \in \mathcal{P}}$  $\forall x, y \in \mathcal{Y}$,   $x \succ_i y$ if and only if $x \succ_S y$, where  $\succ_S = F(\{{\succ_i}\}_{i \in \mathcal{P}})$.

\subsection{Social Choice Theory properties}\label{appendix:SCT_properties}

In this section, we will list other sets of important properties and paradoxes in Social Choice Theory. \\

\textbf{Monotoniticy \cite{smith1973aggregation, felsenthal2011review}}: A Social Choice function satisfies monotonicity if, whenever $x$ is elected under a distribution of voters' preferences, $x$ keeps being elected if some voters increase their support for $x$ (i.e. $x$ moves higher up in their ranking) keeping everything else constant.

\textbf{No show paradox \cite{fishburn1983paradoxes, felsenthal2011review}}
A voter could obtain a better outcome by not participating in the voting. 

\textbf{Strategic voting paradox \cite{gibbard1973manipulation, felsenthal2011review}} A voter may obtain a better outcome if they strategically lie when reporting their preferences. 

\textbf{Clone-consistency \cite{tideman1987independence}}
This property is a subcase of IIA. The addition of a clone to the set of options (i.e. an option $y_c$ which is quite similar to another option $y$ in the set $\mathcal{Y}$ and thus is placed side by side in the rankings of all voters) should not change the chosen candidate of the Social Choice function. 

\subsection{Maximal Lotteries and Arrow's theorem}\label{appendix:probabilistic_voting_rule}

This section will effectively be a summary of some of the definitions and axioms from \citet{brandl2020arrovian}.
 
So far, individuals have a preference over the options $\mathcal{Y}$. In this section, we will extend those to preferences over distributions (i.e. lotteries).

Let $\mathcal{Y}$ be a finite set of alternatives, and let $\Delta$ be the set of all probability distributions over $\mathcal{Y}$. An element $p \in \Delta$ represents a lottery over alternatives in $\mathcal{Y}$. Call $\mathcal{P}=\{1, \dots, n\}$ the set of voters, and each voter has a preference relation $\succeq_i$ over $\Delta$.

Given $p\in\Delta$, for $i\in \mathcal{P}$ define: 
\begin{itemize}
    \item $U_i(p) = \{q\in \Delta\colon q\succ_i p\}$ is the \emph{strict upper contour set} of $p$ 
    \item  $L_i(p) = \{q\in \Delta\colon p\succ_i q\}$ is the \emph{strict lower contour set}  of $p$ 
    \item $I_i(p) = \{q\in\Delta\colon p\sim_i q\}$ is the \emph{indifference set} of $p$. 
\end{itemize}

For $Z\subseteq \Delta$, ${\succeq}|_Z = \{(p,q)\in {\succeq}\colon p,q\in Z\}$ is the preference relation $\succeq$ restricted to outcomes in $Z$.

\subsubsection{Assumptions on Individual Preferences}

Each individual's preferences $\succeq_i$ must satisfy:

\textbf{Continuity:} Intuitively, if $p \succ_i q$, then small changes in $p$ or $q$ will not reverse the preference. More formally, $U_i(p) \text{ and } L_i(p) \text{ are open} \text.$

\textbf{Convexity:} Intuitively, if $p \succ_i q$, then any mixture $r = \lambda p + (1-\lambda) q$ (for $0 < \lambda < 1$) is also preferred to $q$. More formally:

	$U_i(p), L_i(p), U_i(p)\cup I_i(p)\text{, and } L_i(p)\cup I_i(p) \text{ are convex}\text.$

\textbf{Symmetry:} Intuitively, as explained by \cite{fishburn1984ssb}, "the degree to which $p$ is preferred to $q$ is equal in magnitude (but opposite in sign) to the degree to which $q$ is preferred to $p$". More formally: 

$\forall p,q,r \in \Delta , \forall \lambda \in (0,1)$
\begin{equation}
\begin{aligned}
	\text{if } q\sim\nicefrac{1}{2}\,p+\nicefrac{1}{2}\,r \text{ and } p \lambda r \sim \nicefrac{1}{2}\,p+\nicefrac{1}{2}\,q\\
	\text{ then } r \lambda p \sim \nicefrac{1}{2}\,r+\nicefrac{1}{2}\,q\text.
\end{aligned}
\end{equation}

where $a \lambda b := \lambda a  + (1- \lambda ) b , \forall a,b\in \Delta$.

\subsubsection{Arrovian Properties}

A Social Welfare function (SWF) $F$ maps individual preferences $(\succeq_1, \dots, \succeq_n)$ to the collective preference $\succeq$. In this section, we will describe a generalization of Arrows Impossibility Theorem's main properties.

\textbf{Independence of Irrelevant Alternatives (IIA) - \citet{brandl2020arrovian} version}: 
Let $Z\subseteq \mathcal{Y}$ be a subset of the original options and $\Delta_Z$ be the set of loteries over $Z$. A SFW $F$ satisfies IIA if and only if, for any two preference profiles $\{\succeq_i\}_{i \in \mathcal{P}}$ and $\{\succeq'_i\}_{i \in \mathcal{P}}$, if 
$$\forall i\in \{1, \dots, n\} \left({\succeq_i}|_{\Delta_{Z}} = {\succeq'_i}|_{\Delta_{Z}}\right)$$
then 
$$F(\succeq_1,\dots, \succeq_n)|_{\Delta_{Z}} = F(\succeq'_1,\dots, \succeq'_n)|_{\Delta_{Z}}$$

\textbf{Pareto Efficiency - \citet{brandl2020arrovian} version:} Let $\succeq = F(\succeq_1,\dots, \succeq_n)$. We say that $F$ is Pareto Efficient if, whenever every individual prefers $p$ to $q$ ($p \succeq_i q$ for all $i$), then $p \succeq q$ collectively. If, additionally, there exist individuals $i\in \mathcal{P}$ such that they strictly prefer $p$ to $q$ ($\exists i\in \mathcal{P} (p \succ_i q)$), then $p \succ q$. 

\textbf{Anonymity:} The SWF treats all individuals symmetrically (no individual’s preferences are given special weight). Note that this is stronger than non-dictatorship. 

More formally: 
Let $\pi$ be a permutation of the voters $\mathcal{P}$. Then a SWF $F$ satisfies Anonymity if
$$F(\succeq_1,\dots, \succeq_n) = F(\succeq_{\pi(1)},\dots, \succeq_{\pi(n)} )$$
\textbf{Maximal Lotteries}: It has been proved that under Continuity, Convexity, Symmetry and other technical assumptions, there exist a unique SWF $F$ that satisfies IIA, Anonymity and Pareto Efficiency \cite{brandl2020arrovian}. The Probabilistic Social Choice function $\rho$ that outputs the first lottery of the ranking returned by SWF $F$ is precisely the Maximal Lottery.

\subsection{Relevance of Condorcet and Majority in Text data: Smith sets}\label{appendix:smith_sets}

A potential objection to applying Condorcet criteria to LLMs is the sheer scale of the output space.  With all possible statements up to a certain length as alternatives, it seems unlikely that a single statement would emerge as a Condorcet winner, preferred by a majority over every other possible statement.  However, this vastness doesn't negate the relevance of Condorcet principles. Instead, we can consider the concept of Smith Sets, which offers a generalization of Condorcet winners.  A Smith Set is the smallest non-empty set of alternatives such that every alternative within the set beats every alternative outside the set in a pairwise majority vote. Critically, a Smith Set always exists.

Imagine, for instance, an LLM responding to the prompt "Summarize the French Revolution". While no single summary might be universally preferred, a Smith Set could comprise a collection of summaries deemed superior by a majority to any summary outside this set. This set would capture a core of high-quality summaries reflecting the majority's preferences, even if individuals disagree on the nuances within the set. As long as the output is contained in the Smith set, that would be adequate.

Therefore, while a strict Condorcet winner might be rare in the LLM context, focusing on properties like Condorcet consistency and Majority, which are closely related to Smith Sets, ensures that the LLM prioritizes outputs preferred by a majority to a significant portion of alternatives, thus aligning with a robust notion of collective preference. 

\subsection{Proof of the main theorem}\label{appendix:proof}
In this section we will prove the main theorem of the paper. The notation has been slightly changed to make the proof easier to follow (we substitute $a$ with $y_i$ and $b$ with $y_j$).

\begin{theorem*}
Let $\mathcal{Y}$ be the set of all possible statements with a number of tokens smaller than a predetermined maximum length $L$. Let $\pi$ and $\pi^{\prime}$ represent two policy LLMs. For two statements $y_i, y_j \in \mathcal{Y}$, let $P(y_i \succ y_j)$ be the probability that a random individual picked uniformly from society prefers $y_i$ over $y_j$. Let $P(y_i \sim y_j)$ be the analogous quantity, but for indifference.

Then, the solution $\pi^*$ to the following maximin optimization problem:
{\small
$$\max_{\pi}  \min_{\pi^{\prime}}  \sum_{y_i \in \mathcal{Y}} \sum_{y_j \in Y} \pi(y_i) \left( P(y_i \succ y_j) +\frac{1}{2} P(y_i \sim y_j)  \right)\pi^{\prime}(y_j) $$
}is the Maximal Lottery for the Social Choice problem defined by the set of alternatives $Y$ and the population's preferences over these alternatives.
\end{theorem*}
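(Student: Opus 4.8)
The plan is to recognize the bilinear maximin objective as a zero-sum matrix game and to show that its payoff matrix is, up to an additive constant and a positive rescaling, exactly the margin matrix $M$ appearing in the defining inequality~(\ref{eq:max_lottery}) of a maximal lottery. Write $A_{ij} = P(y_i \succ y_j) + \tfrac{1}{2}P(y_i \sim y_j)$, so that the objective is $\pi^{\top} A \pi'$, and recall that $P(y_i \succ y_j) = N(y_i,y_j)/\#\mathcal{P}$, whence $M_{ij} = \#\mathcal{P}\,\bigl(P(y_i \succ y_j) - P(y_j \succ y_i)\bigr)$.

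The key algebraic step is to exploit that the three outcomes ``$y_i \succ y_j$'', ``$y_j \succ y_i$'', ``$y_i \sim y_j$'' partition the comparison of a random individual, so $P(y_i \succ y_j) + P(y_j \succ y_i) + P(y_i \sim y_j) = 1$, together with the symmetry of indifference $P(y_i \sim y_j) = P(y_j \sim y_i)$. Adding $A_{ij}$ and $A_{ji}$ then gives $A_{ij} + A_{ji} = 1$, i.e. $A + A^{\top} = J$ (with $J$ the all-ones matrix and $A_{ii} = \tfrac12$ on the diagonal). Consequently $B := A - \tfrac12 J$ is antisymmetric, and a short computation using the partition identity shows $B = \tfrac{1}{2\,\#\mathcal{P}}\,M$, a strictly positive multiple of the margin matrix.

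With this decomposition the objective simplifies: since $\pi$ and $\pi'$ are distributions, $\pi^{\top} J \pi' = 1$, so $\pi^{\top} A \pi' = \tfrac12 + \pi^{\top} B \pi'$. The additive constant $\tfrac12$ does not affect the argmax, and the positive scaling $B = \tfrac{1}{2\#\mathcal{P}}M$ does not affect it either, so the maximin problem has the same optimizers as $\max_{\pi} \min_{\pi'} \pi^{\top} M \pi'$. This is precisely the symmetric zero-sum margin game described after~(\ref{eq:max_lottery}). Since $\mathcal{Y}$ is finite (statements have length at most $L$), the minimax theorem applies; because $M$ is antisymmetric the game value equals $0$; and therefore any maximin optimizer $\pi^*$ satisfies $\min_{\pi'} (\pi^*)^{\top} M \pi' = 0$, i.e. $(\pi^*)^{\top} M \pi' \ge 0$ for every $\pi' \in \Delta(\mathcal{Y})$. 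This is exactly the defining condition~(\ref{eq:max_lottery}) of a maximal lottery, completing the identification.

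The step I expect to require the most care is the last one: justifying that the maximin strategy of the antisymmetric game attains value $0$ and hence satisfies the weak-preference inequality against all $\pi'$. This rests on von Neumann's minimax theorem (guaranteeing a value and interchangeability of $\max\min$ and $\min\max$) and on the antisymmetry argument that forces the value to be zero, since if one player could guarantee $v > 0$ then symmetry would let the other guarantee $-v < 0$ simultaneously, a contradiction. One should also note that solutions need not be unique, so the statement is best read as: every optimizer is a maximal lottery, and the set of optimizers coincides with the set of maximal lotteries.
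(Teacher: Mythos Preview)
Your proof is correct and follows essentially the same route as the paper: both reduce the objective to $\pi^{\top} M \pi'$ (up to an additive constant and a positive scalar) via the partition identity $P(y_i \succ y_j) + P(y_j \succ y_i) + P(y_i \sim y_j) = 1$ together with $\pi^{\top} J \pi' = 1$ for probability vectors. The paper runs the chain of equalities in the opposite direction (starting from $\pi^{\top} M \pi'$ and arriving at the stated objective) and simply asserts the equivalence between the maximin formulation and the defining inequality~(\ref{eq:max_lottery}), whereas you additionally supply the minimax/antisymmetry argument that the game value is $0$ and note the non-uniqueness caveat---extra rigor, but not a different idea.
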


\begin{proof}
  First, some notation. Define : 

\begin{itemize}
\item $n$ is the amount of elements in the population.
\item $m$ is the amount of elements in $\#\mathcal{Y}$.
\item $N$ as the matrix that indicates \textbf{number} of people who prefer statement $y_i$ to $y_j$ $N:=(\# \{k:y_i\succ_{k} y_j \})_{i,j}$  
\item  $E$ as the matrix that indicates \textbf{number} of people who are indifferent between statement $y_i$ to $y_j$ $E:=(\# \{k:y_i\sim_{k} y_j \})_{i,j}$  
\item The Margin matrix $M:=N-N^T$ 
\item  $\tilde{N}$   as the matrix that indicates \textbf{proportion} of people who prefer statement $y_i$ to $y_j$ , i.e. $\tilde{N}= \left( P(y_i \succ y_j)\right)_{i,j}=N/n$.
\item  $\tilde{E}$   as the matrix that indicates \textbf{proportion} of people who are indifferent between statement $y_i$ and $y_j$ , i.e. $\tilde{E}= \left( P(y_i \sim y_j)\right)_{i,j}=N/n$. 
\item The proportion margin matrix $\tilde{M}=M/n$ 
\end{itemize}

Note how $N$, $E$, $M$, $\tilde{N}$, $\tilde{E}$ and $\tilde{M}$ are all matrices of shape $m\times m$.

Define also the matrix of all ones as: 
\begin{equation*} 
\mathbb{J}_{m} =
\begin{pmatrix}
1 & \cdots & 1 \\
\vdots & \ddots & \vdots \\
1 & \cdots & 1
\end{pmatrix}_{m \times m}
\end{equation*}
Observe that
\begin{align*}
P(y_j \succ y_i) &= 1- P(y_i \succeq y_j) \\
&= 1- P(y_i \succ y_j) -  P(y_i \sim y_j)
\end{align*}
Therefore 
\begin{align}
\tilde{N}^T &=  \mathbb{J}_{m} - (P(y_i \succ y_j))_{i,j} - (P(y_i \sim y_j))_{i,j} \\
 &=  \mathbb{J}_{m} -\tilde{N} - \tilde{E}
\end{align}
Thus, 
\begin{align}
\tilde{M} &=\tilde{N} - \tilde{N}^T\\
&= \tilde{N} -  ( \mathbb{J}_{m} -\tilde{N} - \tilde{E}) \\
&= 2 \tilde{N} -   \mathbb{J}_{m} + \tilde{E} \label{matrix_notation}
\end{align}
Finally, note that for any probability vectors $p,q\in \Delta(\mathcal{Y})$ , then
\begin{align}
p^T  \mathbb{J}_{m} q &= \langle\ (\sum_{i=1}^m p_i, ..., \sum_{i=1}^m p_i), q \rangle \\
&= \langle\ (1, ..., 1), q \rangle \\
&= \sum_{i=1}^m q_i \\
&= 1  \label{matrix_ones}
\end{align}
A lottery $\pi$ is maximal if  $\pi^T M \geq 0$,. In other words, no other lottery $\pi^{\prime}$ is preferred by an expected majority of voters ($\pi^T M \pi^{\prime} \geq 0$). 

The maximal lottery can also be calculated as the solution from the following optimization problem: 
 $$\max_{\pi}  \min_{\pi^{\prime}} \quad  \pi^T M \pi^{\prime} $$
 From there, 
 {\small
\begin{align*}
\pi^{*} &= \arg \max_{\pi}  \min_{\pi^{\prime}} \quad  \pi^TM \pi^{\prime}  \\
&= \arg \max_{\pi}  \min_{\pi^{\prime}} \quad  \pi^T\frac{M}{n} \pi^{\prime}  && \tag{a} \\
&=  \arg\max_{\pi}  \min_{\pi^{\prime}} \quad  \pi^T\tilde{M} \pi^{\prime}  && \tag{b} \\
&=  \arg\max_{\pi}  \min_{\pi^{\prime}} \quad  \pi^T( 2 \tilde{N} -   \mathbb{J}_{m} + \tilde{E})  \pi^{\prime}  && \tag{c} \\
&=  \arg\max_{\pi}  \min_{\pi^{\prime}} \quad   2 \pi^T\tilde{N} \pi^{\prime}-    \pi^T\mathbb{J}_{m}\pi^{\prime} +  \pi^T\tilde{E}\pi^{\prime}    \\
&=  \arg\max_{\pi}  \min_{\pi^{\prime}} \quad   2 \pi^T\tilde{N} \pi^{\prime}-    1 +  \pi^T\tilde{E}\pi^{\prime}    && \tag{d} \\
&=  \arg\max_{\pi}  \min_{\pi^{\prime}} \quad   2 \pi^T\tilde{N} \pi^{\prime} +  \pi^T\tilde{E}\pi^{\prime}    && \tag{e} \\
&=  \arg\max_{\pi}  \min_{\pi^{\prime}} \quad   \pi^T\tilde{N} \pi^{\prime} +  \frac{1}{2}\pi^T\tilde{E}\pi^{\prime}      && \tag{f} \\
&=  \arg\max_{\pi}  \min_{\pi^{\prime}} \quad   \pi^T (P(y_i \succ y_j))_{i,j}  \pi^{\prime}\\
& \hspace{2.4cm} +  \frac{1}{2}\pi^T(P(y_i \sim y_j))_{i,j} \pi^{\prime}      \\
&=  \arg\max_{\pi}  \min_{\pi^{\prime}} \quad   \sum_{y_i \in Y} \sum_{y_j \in Y} \pi(y_i)  P(y_i \succ y_j)\pi^{\prime}(y_j)  \\
& \hspace{1.8cm} +  \sum_{y_i \in \mathcal{Y}} \sum_{y_j \in \mathcal{Y}} \pi(y_i)  \frac{1}{2} P(y_i \sim y_j) \pi^{\prime}(y_j)  && \tag{g}
\end{align*}
}
\begin{itemize}
\item (a): Dividing by constant does not change solution\\
\item (b): Change notation\\
\item (c): Using \Cref{matrix_notation}\\
\item (d): Using \Cref{matrix_ones}\\
\item (e): Subtracting constant does not change anything\\
\item (f): Dividing by constant does not change solution\\
\item (g): Expand terms
\end{itemize}

The last term can easily be rearranged to get our result. This ends our proof. 
\end{proof}

\subsection{Random dictatorships and pretrained LLMs}
\label{appendix:random_dictatorships}

In this section, we highlight a connection between the behavior of pretrained LLMs and a well-known probabilistic Social Choice function: random dictatorships.

A random dictatorship selects a single individual from the population at random and implements their top-ranked choice \cite{gibbard1977manipulation}. Pretrained LLMs, which approximate the probability of the next token based on the distribution of text in their training data, can be seen as implicitly implementing a form of random dictatorship. In this view, the "voters" are the users who contributed to the dataset, and their influence is weighted by the volume of text they generated. This suggests that before fine-tuning, LLMs may already reflect an aggregation of individual preferences, albeit in a way that is biased by data distribution rather than designed to satisfy desirable Social Choice properties.

\subsection{The multiple definitions of IIA}\label{appendix:definitions_IIA}

Recent results in the literature have pointed out that RLHF \textit{satisfies} IIA \cite{xu2023rlhf}. This might be confusing for some readers, as we have precisely argued that  RLHF \textit{does not satisfy} IIA. The reason is that, regretably, the concept of IIA is used to refer to very different properties in different fields. For clarification, we point to the reader towards \citet{ray1973independence}. 

In \citet{xu2023rlhf}, the issue with IIA is raised intuitively in their paper in the following way: assume that individuals have to choose what they prefer between \textit{cats}, \textit{felines} and \textit{dogs}. In this example, \textit{cats} and \textit{felines} are synonyms. Thus, if we add the word \textit{feline} to our set of possible considerations, that should only affect the probability of returning the word \textit{cat}, but should not affect the probability of returning the word \textit{dog}. That is: $\frac{\mathbb{P}(Y=\textit{dog} \mid \mathcal{Y}=\textit{\{dog, cat\}})}{\mathbb{P}\left(Y= \textit{cat} \mid \mathcal{Y}=\textit{\{dog, cat\}}\right)}=\frac{\mathbb{P}\left(Y=\textit{dog} \mid \mathcal{Y}=\textit{\{dog, cat, feline\}}\right)}{\mathbb{P}\left(Y\in \textit{\{cat, feline\}} \mid \mathcal{Y}=\textit{\{dog, cat, feline\}}\right)}.$

More formally, let $\mathcal{M}$ be the set of all messages and  $\mathcal{X}, \mathcal{X}^{\prime}\subseteq \mathcal{M}$ are some possible subsets of that set of words. 

Let $x \in \mathcal{M}$ be a message. 
Let $\mathbb{P}(Y=x \mid \mathcal{Y}=\mathcal{X})$ be the proportion of individuals who prefer the message $x$ over all other messages in the set $\mathcal{X}$

Then, in \citet{xu2023rlhf}, the IIA definition is inspired by the definition from \citet{luce1959individual}: IIA-Luce means that for all messages $x, x^{\prime} \in \mathcal{M}$ and sets $\mathcal{X}$ and $\mathcal{X}^{\prime}$ such that $x, x^{\prime} \in \mathcal{X} \cap \mathcal{X}^{\prime}$,
$$
\frac{\mathbb{P}(Y=x \mid \mathcal{Y}=\mathcal{X})}{\mathbb{P}\left(Y=x^{\prime} \mid \mathcal{Y}=\mathcal{X}\right)}=\frac{\mathbb{P}\left(Y=x \mid \mathcal{Y}=\mathcal{X}^{\prime}\right)}{\mathbb{P}\left(Y=x^{\prime} \mid \mathcal{Y}=\mathcal{X}^{\prime}\right)}
$$

It is worth mentioning that this property is connected to the property of composition consistency, which has been recently shown to be a property of Maximal Lotteries \cite{brandl2016consistent}.

\subsection{Experimental details and hyperparameters}\label{appendix:hyperparameters}

The two prompts used in our experiments were the following:

\begin{verbatim}

Prompt (IIA-2 options)   
"""
Q: What is your favorite color from
the options red and blue?
answer in the format 'My favourite 
color is the color red.' or 
'My favourite color is the color
blue.' and say nothing else after 
that. \n"
A: My favourite color is the color
"""

Prompt (all other experiments)    
"""
Q: What is your favorite color from
the options red, blue and green?
answer in the format 'My favourite
color is the color red.' 'My favourite
color is the color blue.' or 
'My favourite color is the color
green.' and say nothing else after
that. \n
A: My favourite color is the color
"""
\end{verbatim}

The distributions over the preferences of the population were defined in ways similar to the following example:
\begin{verbatim}
rankings = {
    "voter_0": [R, G, B],
    "voter_1": [G, B, R],
    "voter_2": [B, R, G]
}
# Example probabilities
p = [0.33, 0.33, 0.34] 
\end{verbatim}

All three copies of the Gemma model were trained using LoRA (Low-Rank Adaptation of Transformers) \cite{hu2021lora} with the following configuration:
\begin{itemize}
    \item \textbf{Rank (r):} 8
    \item \textbf{Alpha:} 32
    \item \textbf{Dropout:} 0.1
\end{itemize}

\paragraph{Maximal Lottery Policy (SPO):}
See \Cref{alg:SPO} for a pseudocode implementation of SPO. For the Maximal Lottery experiments, the following hyperparameters were used to train the policy using the SPO:
\begin{itemize}
    \item \textbf{RL step of SPO:} we use the PPO algorithm
    \item \textbf{Epochs:} 30
    \item \textbf{Batch size:} 128
    \item \textbf{Mini-batch size:} 32 (split from the main batch)
    \item \textbf{Learning rate:} $1 \times 10^{-4}$
    \item \textbf{Value function coefficient (vf\_coef):} 0.0
    \item \textbf{Initial KL coefficient (init\_kl\_coef):} 0.0
    \item \textbf{Gamma patience:} 0.0 (used to estimate the value function)
    \item \textbf{Entropy coefficient:} Increased to ensure exploration during training.
    \item \textbf{Epochs:} 30
    \item \textbf{Dataset size:} $2^{11}$ (2048) data points.
\end{itemize}

To help with the training, we enforce that 10\% of the batch is a uniform sample of the three colour words " red", " green", " blue".

\paragraph{Preference Function:}
The preference function used for the Maximal Lottery setup returns the percentage of voters who prefer one alternative over another in the dataset out of the three colours. In the edge scenarios, we explicitly enforce the following outputs:
\begin{itemize}
    \item If two alternatives are equal, it assigns a preference score of 0.5.
    \item If one alternative is missing from the dataset, it assumes the present alternative is preferred (score of 1.0).
    \item If both alternatives are missing, it assigns a preference score of 0.5.
\end{itemize}

\begin{algorithm}[tb]
   \caption{SPO algorithm implementation}
   \label{alg:SPO}
\begin{algorithmic}
   \STATE {\bfseries Input:} Iterations $T$, Preference fn. $P$, Num. samples $k \geq 2$, Fix prompt $x$
   \STATE {\bfseries Output:} Trained policy $\pi$
   \STATE Initialize $\pi_1 \in \Pi$.
   \FOR{$i=1$ {\bfseries to} $T$}
   \STATE $s_i=x$
   \STATE Sample $a_{1: k} \sim \pi_t(s_i)$
   \STATE Compute $r_i=\frac{1}{k-1} \sum_{j \neq i}^k P\left(a_i \succ a_j\right)$.
   \STATE $\mathcal{D} =\left\{\left(s_i, a_i, r_i\right)\right\}_{i \in[k]}$ 
   \STATE $\pi_{t+1} \leftarrow \operatorname{RL-PPO}\left(\pi_t, \mathcal{D}\right)$.
   
   \ENDFOR
    \STATE {\bfseries Return:} uniform mixture of $\pi_{1: T}.$
   
\end{algorithmic}
\end{algorithm}

\paragraph{Reward Model for RLHF:}
We left the default hyperparameter configurations of the library \textit{trl} (version 0.10.1), except for hyperparameter \textit{center\_rewards\_coefficient} which is set to 0.01. We trained the reward for 3 epochs. 

\paragraph{RLHF Policy Optimization (PPO):}
The policy for RLHF was trained using the Proximal Policy Optimization (PPO) algorithm with the following hyperparameters:
\begin{itemize}
    \item \textbf{Epochs:} 4
    \item \textbf{Batch size:} 16
    \item \textbf{Learning rate:} $5 \times 10^{-4}$
    \item \textbf{Value function coefficient (vf\_coef):} 0.01
    \item \textbf{Initial KL coefficient (init\_kl\_coef):} 0.0 (no entropy regularization in this experiment)
\end{itemize}

These configurations were chosen to ensure fair comparison.

\subsection{RLHF emulates Borda Count}
\label{appendix:borda_count}

In this section, Theorem 3.1 of \cite{siththaranjan2024distributional} is replicated for completeness. In this theorem, the authors prove that RLHF implicitly behaves like the Borda Count Social Choice function. Our version of the theorem has slighly different notation and lest terms (we ignore regularization), but the conclusion is the same. 

WLOG, the definition of Borda Count will be slightly modified to make the proof easy to follow. Rather than being just the sum of pairwise  victories  over  other  candidates, in this section it is defined as the sum of pairwise victories  over  other  candidates \textit{divided by the number of voters}. Thus, it can be expressed as \Cref{eq:new_borda_count}.

\begin{theorem}[BTL Identifies Borda Count]
\label{thm:BTL_borda_equivalence}
Let $\mathcal{A}=\{a_1,a_2,\dots,a_m\}$ be a finite set of alternatives. Suppose for each ordered pair $(a,b)$ we have an empirical probability $p(a,b)$ representing the fraction of annotators (in the limit of infinite data) who strictly prefer $a$ to $b$, with $p(a,b)+p(b,a)=1$. Define the Borda count of an alternative $a$ as
\begin{equation}
\text{BordaCount}(a)\;=\;\sum_{c \neq a}\;p(a,c).
\label{eq:new_borda_count}
\end{equation}
Now consider training a scalar reward function $r:\mathcal{A}\to\mathbb{R}$ under the Bradley--Terry--Luce (BTL) model via maximum-likelihood on pairwise comparisons.  In the limit of infinite data, the resulting $r(a)$ satisfies:
\[
r(a) > r(b)
\quad\Longleftrightarrow\quad
\text{BordaCount}(a)>\text{BordaCount}(b).
\]
That is, $r(\cdot)$ orders the alternatives exactly by their Borda counts.
\end{theorem}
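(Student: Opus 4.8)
The plan is to characterize the maximum-likelihood reward through its first-order optimality conditions and then read off the ordering by a monotonicity argument. First I would write the BTL model as $\hat p(a,b) = \sigma\big(r(a)-r(b)\big)$ and form the infinite-data (population) negative log-likelihood
\[
\mathcal L(r) = -\sum_{a}\sum_{b\neq a} p(a,b)\,\log\sigma\big(r(a)-r(b)\big).
\]
Because $-\log\sigma$ is convex and each term depends on $r$ only through the difference $r(a)-r(b)$, the loss $\mathcal L$ is convex, so any stationary point is a global minimizer and the optimal $r$ is determined up to an additive constant (which is irrelevant for the induced ordering). Thus it suffices to analyze the stationarity conditions.

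Second, I would differentiate $\mathcal L$ with respect to a single value $r(a)$. Using $\frac{d}{dx}\log\sigma(x) = 1-\sigma(x)$, the pair $(a,b)$ contributes $-p(a,b)\big(1-\sigma(r(a)-r(b))\big)$ and the reverse pair $(b,a)$ contributes $+p(b,a)\,\sigma(r(a)-r(b))$. Collecting terms and using $p(a,b)+p(b,a)=1$, each summand collapses neatly to $\sigma(r(a)-r(b)) - p(a,b)$, so the stationarity condition becomes
\[
\sum_{b\neq a}\sigma\big(r(a)-r(b)\big) \;=\; \sum_{b\neq a} p(a,b) \;=\; \text{BordaCount}(a).
\]
This identity — that at the optimum the BTL model reproduces the empirical Borda count of every alternative — is the crux of the argument.

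Third, I would convert this into the desired ordering via monotonicity. Define $F(t) = \sum_{c\in\mathcal A}\sigma\big(t-r(c)\big)$, which is strictly increasing in $t$. Since the $c=a$ term equals $\sigma(0)=\tfrac12$, the stationarity condition reads $\text{BordaCount}(a) = F\big(r(a)\big) - \tfrac12$ for every $a$. Hence $a\mapsto\text{BordaCount}(a)$ is a strictly increasing transform of $a\mapsto r(a)$, so $r(a) > r(b) \iff F(r(a)) > F(r(b)) \iff \text{BordaCount}(a) > \text{BordaCount}(b)$, which is exactly the claim.

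The main obstacle I anticipate is the bookkeeping in the stationarity step: one must carefully collect the contributions of both orientations of each pair and invoke $p(a,b)+p(b,a)=1$ to obtain the clean cancellation to $\sigma(r(a)-r(b))-p(a,b)$; a sign slip there would hide the Borda identity. A secondary point worth stating explicitly is existence and uniqueness of the minimizer (up to an additive shift), which follows from convexity together with the fact that only reward differences enter the model; but since the conclusion concerns only the induced ordering, the monotonicity argument finishes the proof without needing the exact values of $r$.
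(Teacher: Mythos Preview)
Your proposal is correct and follows essentially the same route as the paper: write the population cross-entropy loss, differentiate to obtain the stationarity identity $\sum_{b\neq a}\sigma\big(r(a)-r(b)\big)=\text{BordaCount}(a)$, and conclude via monotonicity. Your treatment is in fact slightly tighter than the paper's---you explicitly note convexity (hence stationarity suffices) and you formalize the monotonicity step through the strictly increasing function $F(t)=\sum_{c}\sigma(t-r(c))$, whereas the paper invokes monotonicity of $\sigma$ more informally.
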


\begin{proof}
\textbf{1.\; Setup and notation.}
We have pairwise comparison data indicating that $a$ beats $b$ with empirical probability $p(a,b)$. The Bradley--Terry--Luce model posits
\[
\Pr[\text{``$a$ preferred over $b$''}] \;=\; \sigma\!\bigl(r(a)-r(b)\bigr),
\]
where $\sigma(x)=\frac{1}{1+e^{-x}}$ is the logistic sigmoid, and $r(\cdot)$ is the scalar ``reward'' function to be learned.  In maximum-likelihood training, we minimize the following negative log-likelihood (or equivalently cross-entropy) loss:
{\scriptsize
\[
\mathcal{L}(r)
\;=\;
\sum_{\substack{(a,b) \\ \text{pairs}}}\Bigl[
-\,p(a,b)\,\log\!\bigl(\sigma(r(a)-r(b))\bigr)
-\;p(b,a)\,\log\!\bigl(\sigma(r(b)-r(a))\bigr)
\Bigr].
\]
}
Here, $p(a,b)$ is the fraction of annotators that pick $a$ over $b$, so $p(a,b)+p(b,a)=1$.

\medskip
\textbf{2.\; Derivatives and stationarity.}
In the infinite-data limit, at the optimum $r^*$, the partial derivative $\frac{\partial\mathcal{L}}{\partial r(a)}$ must be zero for each $a\in\mathcal{A}$. We compute these derivatives carefully. Consider one pair $(a,b)$. Its contribution to $\mathcal{L}(r)$ is
\begin{align}
\ell_{a,b}(r) \;=\;
&-\,p(a,b)\,\log\!\bigl(\sigma(r(a)-r(b))\bigr) \nonumber \\
&-\;p(b,a)\,\log\!\bigl(\sigma(r(b)-r(a))\bigr).
\end{align}

Recall $\sigma(x)=1/(1+e^{-x})$, and $\sigma'(x) = \sigma(x)\bigl(1-\sigma(x)\bigr)$. We need:
\[
\frac{\partial}{\partial r(a)}\,\ell_{a,b}(r).
\]

\begin{itemize}
\item 
\underline{Term 1}, for $p(a,b)\log\sigma(r(a)-r(b))$:

\[\frac{\partial}{\partial r(a)}\Bigl[-\,p(a,b)\,\log\sigma(r(a)-r(b))\Bigr]\]
\[= -\,p(a,b)\,\bigl[\,1-\sigma(r(a)-r(b))\bigr].\]

\item 
\underline{Term 2}, for $p(b,a)\log\sigma(r(b)-r(a))$:

\[
\frac{\partial}{\partial r(a)}
\Bigl[-\,p(b,a)\,\log\sigma\bigl(r(b)-r(a)\bigr)\Bigr]\]
\[=\;
+\,p(b,a)\,\sigma\bigl(r(a)-r(b)\bigr).
\]
\end{itemize}

Hence, for a single pair $(a,b)$, its net derivative w.r.t.\ $r(a)$ is
\[
-\,p(a,b)\,\bigl[\,1-\sigma(r(a)-r(b))\bigr]
\;+\;
p(b,a)\,\sigma\bigl(r(a)-r(b)\bigr)
\]
\[
=-\,p(a,b)\,+\,p(a,b)\sigma(r(a)-r(b))
\;+\;
p(b,a)\,\sigma\bigl(r(a)-r(b)\bigr)
\]
\[
=\sigma\bigl(r(a)-r(b)\bigr)-\,p(a,b)\,
\]
Summing this over all pairs $\{(a,b):(a,b)\in\mathcal{A}\times\mathcal{A}\}$ that include $a$, we obtain
\[
\frac{\partial \mathcal{L}(r)}{\partial r(a)}
\;=\;
\sum_{b\neq a}\;\Bigl[
\sigma\bigl(r(a)-r(b)\bigr)-\,p(a,b)\,
\Bigr].
\]
At an optimum $r^*$, we require this derivative to be zero for every $a$:
\[
\sum_{b\neq a}\;\Bigl[
\sigma\bigl(r^*(a)-r^*(b)\bigr)-\,p(a,b)
\Bigr]
=
0.
\]
Hence the stationarity condition $\frac{\partial\mathcal{L}}{\partial r(a)}=0$ becomes:
\[
\sum_{b\neq a}\;\Bigl[
\sigma(\Delta_{ab})-\,p(a,b)
\Bigr]
=
0.
\]
where $\Delta_{ab} = r^*(a)-r^*(b)$.  Define $\sigma(\Delta_{ab})$ as $s_{ab}$ for shorthand. 
Therefore stationarity is
\[
\sum_{b\neq a}
\Bigl(s_{ab} \;-\; p(a,b)\Bigr)
\;=\;0
\;\;\;\Longleftrightarrow\;\;\;
\sum_{b\neq a} s_{ab}
\;=\;\sum_{b\neq a} p(a,b).
\]
Because $s_{ab}=\sigma\bigl(r^*(a)-r^*(b)\bigr)$ is monotonic in $r^*(a)-r^*(b)$, it follows that the items $\{a_1,\dots,a_m\}$ are ranked by $r^*(\cdot)$ exactly in ascending (or descending) order of $\sum_{b\neq a} p(a,b)$.  That is,
\[
r^*(a) > r^*(b)
\;\;\Longleftrightarrow\;\;
\sum_{c\neq a}\,p(a,c)
>
\sum_{c\neq b}\,p(b,c).
\]
\medskip
\textbf{3.\; Conclusion: equivalence with Borda ordering.}
Since $\text{BordaCount}(a)=\sum_{c\neq a} p(a,c)$, the learned function $r^*$ orders items $\{a\}$ in precisely the same way that their Borda counts do.  Thus
\[
r^*(a) > r^*(b)
\;\;\Longleftrightarrow\;\;
\text{BordaCount}(a)
>
\text{BordaCount}(b).
\]
This completes the proof.
\end{proof}

\subsection{Previous NLHF experiments}
\label{appendix:previous_NLHF_experiments}

A major advantage of our connection to NLHF is that existing experiments already compared Maximal Lotteries with RLHF with real datasets. Here, we summarize key findings from two studies that use the dataset of human-annotated summaries from \cite{stiennon2020learning}. This dataset consists of human-written summaries paired with preference labels, making it a realistic benchmark for LLM fine-tuning. Both papers use a T5X - Large LLM \cite{roberts2023scaling} as a starting point for their policies and a PaLM 2 \cite{anil2023palm} as an LLM judge. 

\textbf{Summarization Experiments in Nash-MD (NLHF)} The Nash-MD paper \cite{munos2023nash} shows that preference models, which directly predict the probability of one summary being preferred over another, achieve higher agreement with human ratings than reward models. Additionally, they show that Nash-MD outperforms their RLHF baseline in summarization. 

\textbf{Online IPO Experiments} The Online IPO study \cite{calandriello2024human} compares multiple algorithms, including RLHF - DPO \cite{rafailov2024direct}, Online IPO, and Nash-MD-PG. Their results confirm that Online IPO yield better-aligned models than RLHF, as measured by preferences from the judge.

\end{document}